\newtheorem{definition}{Definition}
\newtheorem{lemma}{Lemma}
\newtheorem{theorem}{Theorem}
\newtheorem{claim}{Claim}
\newcommand{\K}{{\sf K}}
\newcommand{\W}{{\sf W}}
\renewcommand{\S}{{\sf S}}
\newcommand{\A}{{\sf A}}
\renewcommand{\phi}{\varphi}
\newcommand{\simS}{\stackrel{{\sf S}}{\sim}}
\newcommand{\simA}{\stackrel{{\sf A}}{\sim}}
\newenvironment{proof}{\noindent{\sc Proof.}}{\hfill $\boxtimes\hspace{2mm}$\linebreak}
\newcommand{\qed}{\hfill $\boxtimes\hspace{1mm}$}
\newenvironment{proof-of-claim}{\noindent{\sc Proof of Claim.}}{\hfill $\boxtimes\hspace{2mm}$\linebreak}
\begin{document}

\title{Logic of Identification}
\title{Logic of Know-Who}
\title{Know-Who: Logic of Identities}
\title{Nobody Knows Who Knows: The Logic of Know-Who}
\title{Knowing Who Knows: The Logic of Know-Who}
\title{Epistemic Logic of Know-Who}

\author {
    Sophia Epstein,\textsuperscript{\rm 1}
    Pavel Naumov\textsuperscript{\rm 2}\\
}
\affiliations {
    \textsuperscript{\rm 1} Claremont McKenna College \\
    \textsuperscript{\rm 2} King's College \\
    sepstein22@cmc.edu, pgn2@cornell.edu
}

\maketitle

\begin{abstract}

The paper suggests a definition of ``know who'' as a modality using Grove-Halpern semantics of names. It also introduces a logical system that describes the interplay between modalities ``knows who'', ``knows'', and ``for all agents''. The main technical result is a completeness theorem for the proposed system.
\end{abstract}

\section{Introduction}

The ability of artificial agents to properly identify humans and other machines  is critical in many AI applications from online and checkout-less shopping, robotic nurses, and unmanned aircraft systems to security, law-enforcement, and lethal autonomous weaponry. Most of the current systems rely on physical identifiers such as facial images, fingerprints, signatures, government-issued IDs, iris recognition, credit card security chips, passwords, and radio signals. Knowing one of these identifiers does not imply knowing the others or knowing who the person (or a machine) ``really'' is. In this paper we propose a formal framework for defining and reasoning about ``knowing (somebody) who''.

\subsection{The Night Stalker}

On July 27th, 1981, a real estate agent came to see a house for sale near Santa Barbara, California. Inside the house at 449 Toltec Way in Goleta, the agent found the bodies of 35-year-old Cheri Domingo, who was house-sitting the place, and of her former boyfriend, 27-year-old Gregory Sanchez~\cite{h81sbnews}. Within several days, the Santa Barbara County sheriff's spokesman Russ Birchim announced that the police knew who the killer was. He was the same man who committed a nonfatal knife attack on another couple in the same neighborhood 22 months ago. Birchim said that the deputies dubbed the killer ``Night Stalker''~\cite{h81latimes}.

Did Birchim really {\em know} who the murderer was? It took almost 40 years for the police to find out that ``Night Stalker'' is actually ``East Area Rapist'' who raped 50 people in Northern California in the 1970s, almost 400 miles away from Santa Barbara. The same person was also known as ``Visalia Ransacker'' and ``Golden State Killer''. It also was discovered that the same person was known to California police as sergeant DeAngelo serving in Auburn, California police forces from August 1976 to July 1979, when he was arrested and sentenced to six months probation for shoplifting a hammer and dog repellent~\cite{l18sacbee,so18latimes}. Did the sentencing judge know who the shoplifter {\em really} was? 

As the example above shows, the same person might be known under different names and knowing one of the person's names does not necessarily imply knowing all of them. To define the meaning of ``know who'' one needs to fix a name space. Knowing the person under one name space does not imply knowing the same person under another. For example, Birchim knew who the murderer was using a hypothetical name space consisting of ``Night Stalker'', ``Morning Stalker'', ``Day Stalker'', and ``Evening Stalker'', but did not know the murderer in a hypothetical space ``East Area Rapist'', ``North Area Rapist'', ``West Area Rapist'', and ``South Area Rapist''. 

There are many other real-world situations with multiple name spaces.  Knowing an author under a pen name might not mean knowing the author's birth name. Knowing students by face is very different then knowing their names or ID numbers. Children separated at birth might know each other, but not know that they are related.

Aloni refers to name spaces as ``conceptual covers''~(\citeyear{a05jpl,a18hintikka}). In this paper, we  describe the universal properties of ``know-who"' that are true for any fixed name space.

\subsection{Outline}
The rest of the paper is organized as follows. In the next section we introduce  and discuss Grove-Halpern  epistemic models with names. Then, we describe syntax of our logical system and give its formal semantics.  After this we highlight a possible extension of our logic by explicit names, discuss connection between our semantics and de dicto/de re knowledge, and review the related literature.
In the next two sections, we introduce the axioms of the Logic of Know-Who and prove their soundness.  Section Completeness Overview highlights the key steps in the proof of the completeness. The actual proof of the completeness is given in
the full version of this paper~\cite{en20arxiv}. The last section concludes. 

\section{Grove-Halpern Models}\label{epistemic model section}

The formal semantics of names that we use in this paper was first proposed by Grove and Halpern to study  modality ``for all agents with a given name"~(\citeyear{gh91kr}).

\begin{definition}\label{model}
A tuple $(S,A,P,\{\sim_a\}_{a\in A},N,I,\pi)$ is called a model if
\begin{enumerate}
    \item $S$ is an arbitrary set of ``states'',
    \item $A$ is an arbitrary set of ``agents'',
    \item $P$ is a function that maps each agent $a\in A$ into a set of states $P(a)\subseteq S$ in which the agent is ``present'',
    \item $\sim_a$ is an ``indistinguishability'' equivalence relation on set $P(a)$ for each agent $a\in A$,
    \item $N$ is a set of ``names'',
    \item $I\subseteq A\times S\times N \times A$ is an ``identification mechanism'' relation satisfying the following two conditions: 
    \begin{enumerate}
        \item for each $a\in A$, each  $s\in P(a)$, and each $n\in N$, there is at least one agent $a'\in A$ such that $(a,s,n,a')\in I$,
        \item for each $a\in A$, each $s\in P(a)$, each  $n\in N$, and each agent $a'\in A$, if $(a,s,n,a')\in I$, then $s\in P(a')$,
    \end{enumerate}
    
    \item for each propositional variable $p$, set $\pi(p)$ is an arbitrary set of pairs $(a,s)$ such that $a\in A$ and $s\in P(a)$.
\end{enumerate}
\end{definition}

\begin{figure}
\begin{center}
\vspace{0mm}
\scalebox{0.45}{\includegraphics{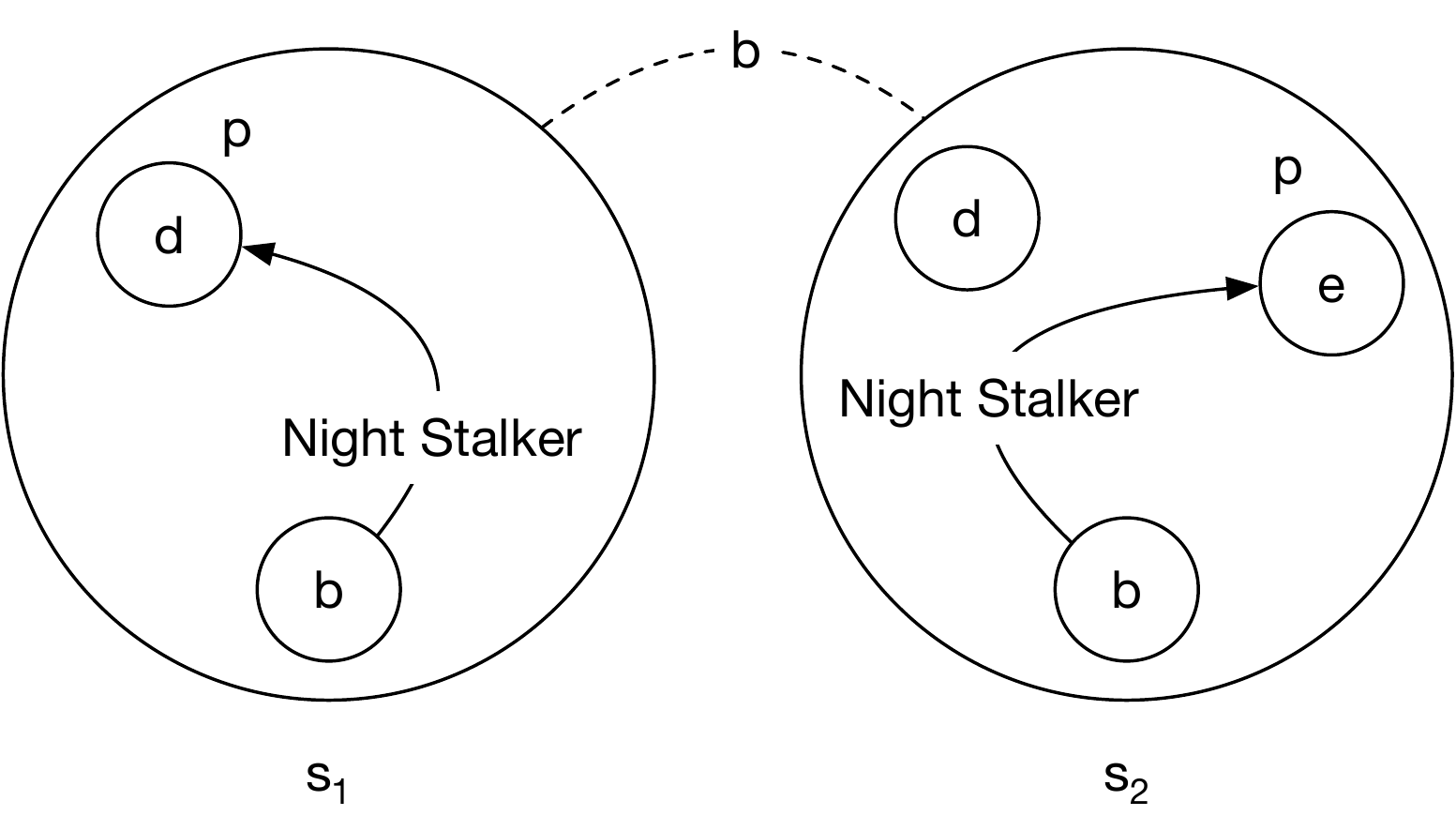}}
\vspace{-1mm}
\caption{The Night Stalker Model (not all edges are shown). Propositional variable $p$ means ``is the murderer''.}\label{night stalker figure}
\end{center}
\end{figure}
Figure~\ref{night stalker figure} depicts a Grove-Halpern model for the Night Stalker example.
Grove-Halpern models use states and indistinguishability relation $\sim_a$ to capture knowledge in almost the same way as it is done in Kripke models for the epistemic logic S5. The diagram in Figure~\ref{night stalker figure} depicts two states, $s_1$ and $s_2$, indistinguishable by Birchim ($b$). In state $s_1$, DeAngelo ($d$) is the murderer. In state $s_2$, somebody else,  agent $e$ is the murderer. The significant difference between S5 models and Grove-Halpern models is that the latter do not assume that each agent is present in each state. This generalization of semantics would be insignificant in standard epistemic logic, but it is important for our logical system because its language contains modality ``for all agents in the given state''. To capture which agent is present in which state, in addition to the set of states $S$ and the set of agents $A$, the model also includes set $P(a)\subseteq S$ for each agent $a\in A$. Set $P(a)$ is the set of states in which an agent $a$ is ``present''. In the Night Stalker model, agents $b$ and $d$ are present in both states and agent $e$ is only present in state $s_2$, see Figure~\ref{night stalker figure}. Thus, $P(b)=P(d)=\{s_1,s_2\}$ and $P(e)=\{s_2\}$. Intuitively, an agent cannot distinguish or not distinguish states in which she is not present. Thus, we assume that the indistinguishability relation $\sim_a$ is only defined on the set of states $P(a)$ in which the agent $a$ is present.

As we have seen in our introductory example, the meaning of {\em know-who} is impossible to define without specifying the name space. Any Grove-Halpern model assumes a fixed set of names $N$. In our running example, $N=\{\mbox{``Night Stalker''}\}$.  The identification mechanism $I$ is the key part of defining a name space. This mechanism specifies which name could be used to refer to which agent. Grove-Halpern models take one of the most general approaches of assigning names to agents. They allow names like ``my mother'' that might refer to different women when used by different people. Thus, the meaning of a name is assumed to be {\em agent-specific}. They also allow names like ``my best friend'' that might refer to different people in different states. Thus, the meaning of a name is assumed to be {\em state-specific}. Furthermore, it is assumed that an agent might use different names to refer to the same person. Hence, for example, there could be a name space that simultaneously includes names ``Night Stalker'' and ``East  Area  Rapist'' for the same person. In such a name space, just like in our example, spokesman Birchim would be able to claim that he knows who the killer is even if he only can identify the perpetrator as ``Night Stalker'' but not as ``East  Area  Rapist''. Finally, the models allow names like ``my parent'' that the same person in the same state might use to refer to two different people. If one says that she knows who, her parent, raised her, then we interpret this as her saying that she knows that she was raised by both parents. To support all these features,  an identification mechanism $I$ is specified as a set of tuples $(a,s,n,a')\in A\times S\times N\times A$. If $(a,s,n,a')\in I$, then agent $a$ in state $s$ might use name $n$ to refer to agent $a'$. The mechanism of the Night Stalker model is depicted by the directed edge on the diagram in Figure~\ref{night stalker figure}. For instance, the directed edge labeled with name Night Stalker from agent $b$ to agent $d$ inside state $s_1$ means that $(b,s_1,\mbox{Night Stalker},d)\in I$. In other words, name Night Stalker refers to agent $d$ when used by agent $b$ in state $s_1$. 

We believe that our work could be relatively easily generalized to a setting with multiple name spaces similar to one used in~\cite{a05jpl,a18hintikka}. If multiple name spaces would be present in the semantics, then the logical system could have multiple know-who modalities labeled by name spaces. Generally speaking, these modalities will be unrelated. In other words, knowing who in one name space does not say anything about knowing who in the other. In this paper, we restrict consideration to a single name space.


In spite of allowing very general identification mechanisms, we impose on them two restrictions captured by conditions 6(a) and 6(b) of Definition~\ref{model}. The first of these conditions states that for any agent $a$, any state $s\in P(a)$, and any name $n\in N$, there must exist at least one   agent $a'$ that agent $a$ refers to by name $n$ in state $s$. In other words, we want to exclude cases when Birchim would claim that he knows that, say, the Santa Claus is the murderer, when there is no single person who is Santa Claus.  The second condition requires that any of the above agents $a'$ must be present in state $s$. This condition guarantees that  ``Night Stalker'' exists in the epistemic state in which Birchim knows that ``Night Stalker'' is the murderer. We introduce these two conditions on name spaces because we believe that without them our formal definition of ``know-who'' modality, see Definition~\ref{sat}, does not reflect the informal meaning of ``knowing who''.  

Another important difference between Grove-Halpern models and the standard Kripke semantics for epistemic logic S5 is that valuation function $\pi$ maps propositional variables not into sets of states, but into sets of pairs $(a,s)$ consisting of an agent $a$ and a state $s\in P(a)$ in which agent $a$ is present. In other words, propositional variables interpreted  as sentences in which the subject is omitted. Grove and Halpern call them {\em relative} sentences. In our example from Figure~\ref{night stalker figure}, the phrase ``is the murderer'' from the sentence
``Spokesman Russ Birchim knows who is the murderer,'' is the meaning of proposition $p$. Set $\pi(p)$ is the set of all pairs $(a,s)$ such that statement $p$ is true about agent $a$ in state $s$. 

Grove and Halpern~(\citeyear{gh91kr}) first introduce Definition~\ref{model} without conditions 6(a) and 6(b). Later they add condition 6(b) but simultaneously make names no longer agent-specific~(\citeyear{gh93jlc}). In his third work, Grove again makes names agent-specific and adds condition 6(a), but in a form stronger than ours: ``exactly one'' instead of ``at least one''~(\citeyear{g95ai}). The notion of a conceptual cover~\cite{a05jpl} is significantly more restrictive. It requires each agent to have a unique name and each name to refer to a unique agent.

\section{Syntax}\label{syntax section}

Not only we interpret propositional variables as relative sentences, but we do the same with all modal formulae in our language. In Definition~\ref{sat}, we will specify formal semantics of our logic as a ternary relation $(a,s)\Vdash\phi$ between an agent, a states, and a formula. Informally, it means that formula $\phi$ is true in state $s$ {\em about} agent $a$. In our introductory example,
$
(a,s)\Vdash \mbox{``is the murderer''}
$
where $a$ is the person who was known as ``Night Stalker''. This approach allows a very straightforward treatment of know-who modality $\W$. Namely, to state that spokesman Birchim knows who is the murderer, we write
$$
(b,s)\Vdash \W(\mbox{``is the murderer''}),
$$
where $b$ is the person known as spokesman Birchim. Imagine a hypothetical situation when police announces a press conference at which Birchim will disclose the name of the murderer. Before the conference starts, any journalist $j$ attending the conference, would not know yet who is the murderer:
$$
(j,s)\Vdash \neg\W(\mbox{``is the murderer''}),
$$
but the journalist would know who, spokesman Birchim, knows who is the murderer:
$$
(j,s)\Vdash \W\W(\mbox{``is the murderer''}).
$$
We treat knowledge modality $\K$ in a similar subscript-free fashion. Namely, we write $(a,s)\Vdash\K\phi$ if in a state $s$ an agent $a$ knows that statement $\phi$ is true {\em about} the agent $a$. For example, because ``Night Stalker'' knows that he himself is the murderer,
$$
(a,s)\Vdash \K(\mbox{``is the murderer''}).
$$
where $a$ is the person who was known as ``Night Stalker''. In addition to modalities for know-who $\W$ and knowledge $\K$, our system also includes modality $\A$ that stands for ``all agents in the state''. For example, the journalist would know that not all people are innocent:
$$
(j,s)\Vdash \K\neg\A(\mbox{``is not the murderer''}).
$$
Although relative sentences have already been used by Grove and Halpern~(\citeyear{gh91kr}), subscript-free modalities were introduced much later in Friendship Logic~\cite{slg13tark} that contains modalities $\K$, $\A$, and ${\sf F}$. The latter stands for ``for all my friends''.

In this paper we propose a sound and complete logical system that describes the interplay between modalities $\W$, $\K$, and $\A$.
We assume a fixed countable set of propositional variables. The language $\Phi$ of our system is defined by the grammar:
$$
\phi:=p\;|\;\neg\phi\;|\;\phi\to\phi\;|\;\W\phi\;|\;\K\phi\;|\;\A\phi.
$$
We read $\W\phi$ as ``knows an agent for whom $\phi$ is true'', $\K\phi$ as ``knows that $\phi$ is true about herself'', and $\A\phi$ as ``$\phi$ is true for all agents''.
We suppose that Boolean constant $\top$ and conjunction $\wedge$ are defined in the standard way. For any finite set $X\subseteq\Phi$, by $\wedge X$ we mean the conjunction of all formulae in $X$. By definition, $\wedge\varnothing$ is $\top$.

\section{Semantics}

Next we define formal semantics of our logical system. The key part of this definition is item 6 that specifies the meaning of know-who modality $\W$.

\begin{definition}\label{sat}
For any model $(S,A,P,\{\sim_a\}_{a\in A},N,I,\pi)$, any agent $a\in A$, any state $s\in P(a)$, and any formula $\phi\in \Phi$, satisfiability relation $(a,s)\Vdash\phi$ defined as follows:
\begin{enumerate}
    \item $(a,s)\Vdash p$ if $(a,s)\in \pi(p)$,
    \item $(a,s)\Vdash\neg\phi$ if $(a,s)\nVdash\phi$,
    \item $(a,s)\Vdash\phi\to\psi$ if $(a,s)\nVdash\phi$ or $(a,s)\Vdash\psi$,
    \item $(a,s)\Vdash \A\phi$ if $(a' ,s)\Vdash\phi$ for each  agent $a'\in A$ such that $s\in P(a')$,
    \item $(a,s)\Vdash \K\phi$ if $(a,s')\Vdash\phi$ for each  state $s'\in P(a)$ such that $s\sim_a s'$,
    \item $(a,s)\Vdash \W\phi$ when there is a name $n\in N$ such that for each state $s'\in P(a)$ and each agent $a'\in A$, if $s\sim_a s'$ and $(a, s',n,a')\in I$, then $(a',s')\Vdash \phi$.
\end{enumerate}
\end{definition}



State $s'$ in item 6 is used to capture the ``know'' part of ``know-who''. Namely, we require that the same name $n$ identifies the right person in all states $s'$ that agent $a$ cannot distinguish from the current state $s$. This is very similar to how the modality {\em know-how} is often defined in the literature~\cite{aa16jlc,fhlw17ijcai,nt17aamas,nt18ai,nt18aamas,nt18aaai}.

In spite of its generality, our definition of know-who has limitations. Namely, it does not support the case when ``who'' is a group of agents as in  ``John knows who is conspiring against whom'' and  ``John knows who insulted whom in whose presence''~\cite{bl03}. The complexity of these settings comes not from the fact that know-who refers to a set of agents, but rather from the fact that this set has a structure. An ``insulter'' is different from the ``insultee'' and the ``observer''. A hypothetical group know-who modality would need not only refer to a name of the group, but also to specify {\em who is who} in this group.

\section{Explicit Names}

In the standard epistemic logic, only state $s$ is placed on the left-hand-side of the satisfiability relation $\Vdash$. As a result, statements in this logic are about states, not agents. By following~\cite{gh91kr,gh93jlc,g95ai,slg13tark} and placing both the state and the agent on the left-hand-side of $\Vdash$, we gain the ability to express statements about states, statements about agents, and statements about agents in states. 

It appears, however, that we lose the ability to express statements like ``in state $s$ agent $a$ knows that agent $b$ knows $\phi$'', which is expressible in the standard epistemic logic by $s\Vdash \K_a\K_b\phi$. This ability could be easily restored by adding {\em reference by name} modality $@_n$ to our language to form language $\Phi^@$:
$$
\phi:=p\;|\;\neg\phi\;|\;\phi\to\phi\;|\;\W\phi\;|\;\K\phi\;|\;\A\phi\;|\;@_n\phi,
$$
where $n$ is any name. We read $@_n$ as ``for any agent with name $n$''. The semantics of language $\Phi^@$ could be defined using Grove-Halpern models by adding the following part to Definition~\ref{sat}:

\begin{definition}
$(a,s)\Vdash @_n\phi$  when for each agent $a'\in A$, if  $(a, s,n,a')\in I$, then $(a',s)\Vdash \phi$.
\end{definition}

Using modality $@$, statement ``in state $s$ agent $a$ knows that agent $b$ knows $\phi$'' could be written in our system as $(a,s)\Vdash\K@_{\mbox{\footnotesize{Bob}}}\,\K\phi$, assuming that in state $s$ agent $a$ refers to agent $b$ as Bob. Note that with this addition, we still retain the ability to have statements about states and agents. For example, statement $\phi$ in the above example could be any statement about state $s$ and/or agent $b$. Using modality $@$ we can express the fact that the agent known to spokesman Russ Birchim as ``the Night Stalker'' is the murderer, see Figure~\ref{night stalker figure}, as
$$
(b,s_1)\Vdash @_{\mbox{\footnotesize Night Stalker}}\,p.
$$
We can also express the fact that Birchim knows this as
$$(b,s_1)\Vdash \K@_{\mbox{\footnotesize Night Stalker}}\,p.$$ 
In this paper we give a complete logical system that describes the universal properties expressible in language $\Phi$, leaving proving completeness of a similar system for language $\Phi^@$ for the future.

\section{Knowing {De Dicto} vs. {De Re}}

There has been a long tradition of discussions in philosophy whether one should distinguish knowing the name of a object from knowing the object itself. These two forms of knowledge are often referred to as {\em de dicto} and {\em de re} knowledge respectively. Here is one of the examples used in the literature to distinguish these two forms of knowledge:

\begin{quote}
    {\em Suppose, for example, that I’m asked who is Obama. While in some contexts, say at an exam at school, in order to answer it I have to know that Obama is the president of the US, in some other context, say at a party at the White House, what is needed is knowledge of someone in particular that he is Obama.}~\cite{co13sl}
\end{quote}

The authors of this example consider ``knowing who Obama is'' in the first case as a de dicto knowledge of the fact that Obama is a name of 44th President of the United States, while ``knowing who Obama is'' in the second example as de re knowledge of Obama as a physical object. We disagree. To us, the only difference between these two cases is that the first is using naming system based on job title (``44th President'') while the second is using naming system based on visual identity. To make our point about how artificial the distinction between knowing the name and knowing the object is, consider a hypothetical example when baby Barack Obama was accidentally switched with baby Omar Bari at birth in the hospital. As a result, Omar Bari grew up under name Barack Obama and became the 44th U.S. president, while ``real''  Barack Obama works as a hotel manager in Hawaii under name Omar Bari. When somebody at a party in White House is asking who is Obama, are they looking for the President or the ``real'' Obama, the manager? 

Wang and Seligman~(\citeyear{ws18aiml}) argue for the distinction between {de dicto} and {de re} knowledge using the broken robot example originally proposed in~\cite{g95ai}:
\begin{quote}
    {\em Grove gives an interesting example of a robot with a mechanical problem calling out for help (perhaps in a Matrix-like future with robots ruling the world unaided by humans). To plan further actions, the broken robot, called $a$, needs to know if its request has been heard by the maintenance robot, called $b$. But how to state exactly what $a$ needs to know?}
\end{quote}
To illustrate de re/de dicto distinction they list four different things that $a$, the broken robot, might know: 
    (i) the robot named $b$ knows that the robot named $a$ needs help,
   (ii) the robot named $b$ knows that it, i.e. the broken robot, needs help,
    (iii) the maintenance robot knows that the robot named $a$ needs help, 
    (iv) the maintenance robot knows that it, i.e. the broken robot, needs help.
Although we agree that these four sentences have different meanings, we believe that this difference could be completely captured by distinguishing name space containing names $a$ and $b$ from the name space containing names ``maintenance robot'' and ``broken robot''.

Since the distinction between de dicto and de re appears unimportant in our setting, we do not stress it in this paper.

\section{Related Literature}\label{Related Literature section}



Hintikka~(\citeyear{h62}) argues that statement ``agent $a$ knows who is agent $b$'' could be expressed in a first order epistemic logic as $\exists x\,\K_a(b=x)$. Wang agrees, stating that ``to formalize `I know who $b$ is' we do need quantifiers''~(\citeyear{w18hintikka}).
Bo\"{e}r and Lycan discuss multiple meanings of ``know-who'' in English~(\citeyear{bl03}). 
Aloni adds conceptual covers (name spaces) to modal language with first-order quantifiers and proves the completeness of such system~(\citeyear{a05jpl}). She later further develops this approach~(\citeyear{a18hintikka}). 
Wang and Seligman's related work, while not dealing directly with know-who,  proposes a sound and complete term logic capturing properties of non-rigid names that might not be common knowledge~(\citeyear{ws18aiml}). Unlike these works, we treat know-who as a single modality and avoid the use of quantifiers. 

Wang calls know-who one of ``know-wh'' types of knowledge: know-who, know-how, know-whether, know-what~(\citeyear{w18hintikka}). Among them, modal properties of {\em know-how} are studied the most~\cite{aa16jlc,fhlw17ijcai,w15lori,w17synthese,nt17aamas,nt18ai,nt18aamas,nt18aaai,cn20ai}. 
Logics of {\em know-whether} are studied in~\cite{fwv15rsl,fgksv20arxiv}.
Different forms of {\em know-value} logics are investigated in~\cite{wf13ijcai,gw16aiml,vgw17icla}. 
Xu, Wang, and Studer proposed a logic of {\em know-why}~(\citeyear{xws19synthese}).

\section{Axioms}\label{Axioms section}

In addition to propositional tautologies in language $\Phi$, our logical system has the following axioms, where here and in the rest of the paper $\Box$ is either modality $\A$ or modality $\K$:

\begin{enumerate}
    \item Truth: $\Box\phi\to\phi$,
    \item Distributivity: $\Box(\phi\to\psi)\to(\Box\phi\to\Box\psi)$,
    \item Negative Introspection: $\neg\Box\phi\to\Box\neg\Box\phi$,
    \item Know-Nobody: $\A\neg\phi\to\neg\W\phi$,
    \item Know-All: $\K\A(\phi\to\psi)\to(\W\phi\to\W\psi)$,
    \item Introspection of Know-Who: $\W\phi\to \K\W\phi$.
\end{enumerate}
The Truth, the Distributivity, and the Negative Introspection are standard S5 axioms. The Know-Nobody axiom says that if there is no agent in the current state for whom $\phi$ is true, then the current agent cannot know somebody for whom $\phi$ is true. The Know-All axiom says that if the agent knows that $\phi\to\psi$ for all agents in the current state and the current agent knows someone for whom $\phi$ is true, then she also knows someone for whom $\psi$ is true. The Introspection of Know-Who axiom says that if the current agent knows for whom $\phi$ is true, then she knows that she knows.

We write $\vdash\phi$ if formula $\phi$ is provable in our logical system using the Modus Ponens inference rule and the three forms of the Necessitation inference rule:
$$
\dfrac{\phi,\;\;\;\ \phi\to\psi}{\psi}
\hspace{10mm}
\dfrac{\phi}{\A\phi}
\hspace{10mm}
\dfrac{\phi}{\K\phi}
\hspace{10mm}
\dfrac{\phi}{\W\phi}.
$$
We write $X\vdash\phi$ if formula $\phi$ is provable from the theorems of our logical system and the set of additional formulae $X$ using only the Modus Ponens inference rule.
%

The next two lemmas state well-known facts about S5 modality.
We give their proofs in the full version of the paper.
\begin{lemma}\label{super distributivity}
If $\phi_1,\dots,\phi_n\vdash\psi$, then $\Box\phi_1,\dots,\Box\phi_n\vdash\Box\psi$.
\end{lemma}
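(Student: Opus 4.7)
The plan is to reduce the statement to the single-formula case via the deduction theorem, then apply Necessitation and Distributivity in sequence.

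First I would invoke the syntactic deduction theorem for the relation $\vdash$. This is legitimate here because $X \vdash \phi$ is defined as derivability from theorems $\cup\, X$ using only Modus Ponens, and the system contains all propositional tautologies as axioms. The standard induction on the length of derivations therefore gives: from $\phi_1,\dots,\phi_n \vdash \psi$ one obtains the theorem
$$\vdash \phi_1 \to (\phi_2 \to \cdots \to (\phi_n \to \psi)\cdots).$$

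Next, I would apply the Necessitation rule for $\Box$ (available for both $\A$ and $\K$) to this theorem, yielding
$$\vdash \Box(\phi_1 \to \phi_2 \to \cdots \to \phi_n \to \psi).$$
Then, using the Distributivity axiom $\Box(\alpha\to\beta)\to(\Box\alpha\to\Box\beta)$ together with Modus Ponens, I can strip the outermost $\Box$ across the implication to obtain $\vdash \Box\phi_1 \to \Box(\phi_2 \to \cdots \to \phi_n \to \psi)$. Iterating this step $n$ times (each iteration peels one antecedent off the inner implication and moves a $\Box$ in front of the next $\phi_i$) produces
$$\vdash \Box\phi_1 \to \Box\phi_2 \to \cdots \to \Box\phi_n \to \Box\psi.$$
A final block of $n$ applications of Modus Ponens, using the hypotheses $\Box\phi_1,\dots,\Box\phi_n$ in turn, delivers $\Box\phi_1,\dots,\Box\phi_n \vdash \Box\psi$.

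The only genuinely delicate point is the first step: verifying that the deduction theorem actually applies to the relation $\vdash$ as defined in the paper. The paper is careful to say that $X\vdash\phi$ uses only Modus Ponens over theorems plus $X$ (with Necessitation restricted to the theorem-level $\vdash\phi$), which is precisely the setting in which the textbook deduction theorem goes through. Everything after that is a routine chain of Distributivity applications and Modus Ponens, so I would likely leave those iterations to a brief informal remark rather than spell out all $n$ intermediate formulae.
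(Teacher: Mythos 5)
Your proof is correct and follows essentially the same route as the paper's: deduction theorem applied $n$ times, then Necessitation, then an iterated chain of Distributivity and Modus Ponens. The only (cosmetic) difference is that the paper discharges each hypothesis $\Box\phi_i$ immediately after stripping its box, whereas you first build the full implication $\Box\phi_1\to\cdots\to\Box\phi_n\to\Box\psi$ (which needs a touch of propositional transitivity between Distributivity steps) and discharge all hypotheses at the end.
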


\begin{lemma}\label{positive introspection lemma}
$\vdash \Box\phi\to\Box\Box\phi$.
\end{lemma}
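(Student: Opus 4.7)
The plan is to derive positive introspection in the standard S5 fashion from the Truth, Distributivity, and Negative Introspection axioms (all three of which apply to $\Box$ since $\Box$ ranges over $\A$ and $\K$). I will essentially show $\Box\phi\to\neg\Box\neg\Box\phi\to\Box\Box\phi$, where the first implication comes from Truth and the second from chaining Negative Introspection with itself through Distributivity and Necessitation.

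First, I would apply the Truth axiom with the formula $\neg\Box\phi$ in place of $\phi$ to obtain $\Box\neg\Box\phi\to\neg\Box\phi$, and take its propositional contrapositive to get $\Box\phi\to\neg\Box\neg\Box\phi$. This handles the "easy half" of the chain. Next I would take Negative Introspection in its original form, $\neg\Box\phi\to\Box\neg\Box\phi$, and take its contrapositive to obtain $\neg\Box\neg\Box\phi\to\Box\phi$. Applying Necessitation for $\Box$ and then the Distributivity axiom to this last implication yields $\Box\neg\Box\neg\Box\phi\to\Box\Box\phi$.

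To bridge the gap, I would apply Negative Introspection once more, this time with $\neg\Box\phi$ substituted for $\phi$, to get $\neg\Box\neg\Box\phi\to\Box\neg\Box\neg\Box\phi$. Composing this with the formula from the previous step gives $\neg\Box\neg\Box\phi\to\Box\Box\phi$. Finally, chaining this with $\Box\phi\to\neg\Box\neg\Box\phi$ from the first step yields $\Box\phi\to\Box\Box\phi$, as desired. Alternatively, one could package the two Necessitation-plus-Distributivity moves into a single appeal to Lemma~\ref{super distributivity}, observing that $\neg\Box\neg\Box\phi\vdash\Box\phi$ (the contrapositive of Negative Introspection) implies $\Box\neg\Box\neg\Box\phi\vdash\Box\Box\phi$.

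There is no real obstacle here: the statement is a textbook consequence of the $\mathsf{S5}$ axiom schemas, and since the three relevant axioms (Truth, Distributivity, Negative Introspection) are listed uniformly for $\Box\in\{\A,\K\}$, the same derivation works for both modalities without modification. The only thing to be careful about is bookkeeping of contrapositives and the correct substitution instances of Negative Introspection (applied to $\phi$ and to $\neg\Box\phi$), since a substitution error would leave an unbridgeable gap in the chain.
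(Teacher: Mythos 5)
Your proposal is correct and follows essentially the same derivation as the paper: the Truth axiom instance at $\neg\Box\phi$ giving $\Box\phi\to\neg\Box\neg\Box\phi$, Negative Introspection applied both at $\phi$ (contraposed, then boxed via Necessitation and Distributivity) and at $\neg\Box\phi$, chained together by propositional reasoning. The only difference is the order in which the implications are composed, which is immaterial.
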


\section{Soundness}\label{Soundness section}

In this section we prove the soundness of our logical system. The soundness of the Truth, the Distributivity, and the Negative Introspection axioms, as well as the Modus Ponens and the three forms of the Necessitation inference rule, is straightforward. Below we show the soundness of each remaining axiom as a separate lemma. In these lemmas we assume that $(a,s)$ is an arbitrary pair of an agent $a$ and a state $s$ such that $s\in P(a)$.

\begin{lemma}
If $(a,s)\Vdash \A\neg\phi$, then $(a,s)\nVdash \W\phi$.
\end{lemma}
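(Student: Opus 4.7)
The plan is to argue by contradiction, exploiting the two existence/presence conditions 6(a) and 6(b) on the identification mechanism $I$ from Definition~\ref{model}. Suppose $(a,s)\Vdash \A\neg\phi$ and, towards a contradiction, $(a,s)\Vdash \W\phi$. The first assumption, by item~4 of Definition~\ref{sat}, tells us that $(a',s)\Vdash\neg\phi$ for every agent $a'\in A$ with $s\in P(a')$. The second assumption, by item~6, gives a witnessing name $n\in N$ such that for all $s'\in P(a)$ and all $a'\in A$, whenever $s\sim_a s'$ and $(a,s',n,a')\in I$, we have $(a',s')\Vdash\phi$.

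The key step is to instantiate this witness at $s'=s$. This is legitimate because $s\in P(a)$ (hypothesis of the lemma) and $\sim_a$ is an equivalence relation on $P(a)$, hence $s\sim_a s$. Now I would invoke condition~6(a) of Definition~\ref{model} to produce some agent $a'\in A$ with $(a,s,n,a')\in I$, and then condition~6(b) to conclude that $s\in P(a')$. This is the crucial use of the two restrictions on $I$ that the paper explicitly motivates (``Santa Claus'' and ``Night Stalker exists'' paragraphs): without 6(a), no such $a'$ need exist; without 6(b), $a'$ might fail to be present at $s$, so the universal quantifier in $\A\neg\phi$ would not reach it.

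With $a'$ in hand, the two branches collide immediately. The $\W\phi$ branch yields $(a',s)\Vdash\phi$, while the $\A\neg\phi$ branch, applied to this same $a'$ (since $s\in P(a')$), yields $(a',s)\Vdash\neg\phi$, i.e., $(a',s)\nVdash\phi$ by item~2 of Definition~\ref{sat}. This contradiction forces $(a,s)\nVdash\W\phi$, as required.

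There is no real obstacle here; the only subtlety is making sure the contradiction witness lies in $P(a')$ so that the $\A$-clause applies to it, and that is exactly what conditions 6(a) and 6(b) jointly guarantee. The proof is essentially one line once the instantiation $s'=s$ is chosen.
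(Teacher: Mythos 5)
Your proof is correct and follows essentially the same route as the paper's: both instantiate the $\W\phi$ witness at $s'=s$ via reflexivity of $\sim_a$, invoke condition 6(a) to obtain an agent $a'$ with $(a,s,n,a')\in I$, and play $(a',s)\Vdash\phi$ against the $\A\neg\phi$ clause (the paper phrases it as a contrapositive rather than a contradiction, which is immaterial). If anything, you are slightly more careful than the paper in explicitly citing condition 6(b) to ensure $s\in P(a')$, which is indeed needed for the $\A$-clause to apply to $a'$.
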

\begin{proof}
Suppose that $(a,s)\Vdash \W\phi$. Thus, by item 6 of Definition~\ref{sat}, there is a name $n\in N$ such that for each state $s'\in P(a)$ and each agent $a'\in A$, if $s\sim_a s'$ and $(a, s',n,a')\in I$, then $(a',s')\Vdash \phi$. 

Note that $s\in P(a)$ by the assumption in the preamble of this section and  $s\sim_a s$ because $\sim_a$ is an equivalence relation. Thus, for each agent $a'\in A$, if  $(a,s,n,a')\in I$, then $(a',s)\Vdash \phi$. 

By condition (a) of item 6 in Definition~\ref{model}, there is at least one agent $a'\in A$ such that $(a,s,n,a')\in I$. Thus, $(a',s)\Vdash \phi$. Hence, $(a',s)\nVdash \neg\phi$ by item 2 of Definition~\ref{sat}.  Therefore, $(a,s)\nVdash \A\neg\phi$ by item 4 of Definition~\ref{sat}.
\end{proof}




\begin{lemma}
If $(a,s)\Vdash \K\A(\phi\to\psi)$ and $(a,s)\Vdash\W\phi$, then $(a,s)\Vdash\W\psi$.
\end{lemma}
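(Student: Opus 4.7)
The plan is to unpack the semantics of both hypotheses, reuse the name witnessing $\W\phi$ as the witness for $\W\psi$, and show that every agent reached by that name at every indistinguishable state satisfies $\psi$.

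First, I would fix the name $n$ given by item 6 of Definition~\ref{sat} applied to $(a,s)\Vdash\W\phi$. To verify $(a,s)\Vdash\W\psi$, I take an arbitrary state $s'\in P(a)$ and an arbitrary agent $a'\in A$ with $s\sim_a s'$ and $(a,s',n,a')\in I$, and aim to show $(a',s')\Vdash\psi$. From the $\W\phi$ hypothesis applied to this $s'$ and $a'$, I immediately get $(a',s')\Vdash\phi$.

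Next I would chain through the $\K\A(\phi\to\psi)$ hypothesis. Since $s\sim_a s'$ and $s'\in P(a)$, item 5 of Definition~\ref{sat} gives $(a,s')\Vdash\A(\phi\to\psi)$. To apply item 4 to $a'$, I need $s'\in P(a')$; this is exactly what condition 6(b) of Definition~\ref{model} delivers from $(a,s',n,a')\in I$. Therefore $(a',s')\Vdash\phi\to\psi$, which combined with $(a',s')\Vdash\phi$ and item 3 yields $(a',s')\Vdash\psi$, as required.

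There is no serious obstacle here; the argument is a direct semantic chase. The only step that is easy to overlook is invoking condition 6(b) to guarantee $s'\in P(a')$ before the $\A$-clause of Definition~\ref{sat} can legally be applied to the pair $(a',s')$. This is precisely the role that condition 6(b) was designed to play in the soundness proof, and it is the reason the axiom is stated with $\A$ under a $\K$ rather than with a bare $\A$.
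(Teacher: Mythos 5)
Your proof is correct and follows essentially the same route as the paper's: reuse the name $n$ witnessing $\W\phi$, pull $\A(\phi\to\psi)$ down to $s'$ via item 5, invoke condition 6(b) of Definition~\ref{model} to get $s'\in P(a')$, and conclude by modus ponens at $(a',s')$. No further comment is needed.
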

\begin{proof}
Suppose that $(a,s)\Vdash\W\phi$. Thus, by item 6 of Definition~\ref{sat}, there is a name $n\in N$ such that for each state $s'\in P(a)$ and each agent $a'\in A$, if $s\sim_a s'$ and $(a, s',n,a')\in I$, then $(a',s')\Vdash \phi$.

Consider any state $s'\in P(a)$ and any agent $a'\in A$ such that $s\sim_a s'$ and $(a, s',n,a')\in I$. Then, as we have shown above,
\begin{equation}\label{Latvia}
   (a',s')\Vdash \phi. 
\end{equation}
By item 6 of Definition~\ref{sat}, it will suffice to show that $(a',s')\Vdash \psi$. Indeed, by item 5 of Definition~\ref{sat} assumption $(a,s)\Vdash \K\A(\phi\to\psi)$ implies that $(a,s')\Vdash \A(\phi\to\psi)$  because $s'\in P(a)$ and $s\sim_a s'$. 

Note that $s'\in P(a')$ by condition (b) of item 6 in Definition~\ref{model} because $(a, s',n,a')\in I$. Hence, statement $(a,s')\Vdash \A(\phi\to\psi)$ implies that $(a',s')\Vdash \phi\to\psi$ by item 4 of Definition~\ref{sat}. Therefore, $(a',s')\Vdash \psi$ by item 3 of Definition~\ref{sat} and statement~(\ref{Latvia}).
\end{proof}

\begin{lemma}
If $(a,s)\Vdash \W\phi$, then $(a,s)\Vdash\K\W\phi$.
\end{lemma}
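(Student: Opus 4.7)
The plan is to unpack the two modalities using items 5 and 6 of Definition~\ref{sat} and to verify that the same name that witnesses $\W\phi$ at $(a,s)$ also witnesses $\W\phi$ at every $(a,s'')$ indistinguishable from $(a,s)$ by the agent $a$. This is the standard ``positive introspection'' argument for an S5-style modality, adapted to the fact that the $\W$ modality existentially quantifies over names.

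First, I would assume $(a,s)\Vdash \W\phi$ and invoke item 6 of Definition~\ref{sat} to obtain a name $n\in N$ such that for every state $s'\in P(a)$ and every agent $a'\in A$, if $s\sim_a s'$ and $(a,s',n,a')\in I$, then $(a',s')\Vdash\phi$. To establish $(a,s)\Vdash \K\W\phi$, by item 5 of Definition~\ref{sat} it suffices to pick an arbitrary state $s''\in P(a)$ with $s\sim_a s''$ and show $(a,s'')\Vdash \W\phi$.

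Then I would propose the very same name $n$ as the witness at $(a,s'')$. Consider any state $s'''\in P(a)$ and any agent $a'\in A$ with $s''\sim_a s'''$ and $(a,s''',n,a')\in I$. Since $\sim_a$ is an equivalence relation on $P(a)$ (Definition~\ref{model}, item 4), from $s\sim_a s''$ and $s''\sim_a s'''$ we get $s\sim_a s'''$. Applying the original witnessing property of $n$ to $s'''$ and $a'$ yields $(a',s''')\Vdash\phi$. By item 6 of Definition~\ref{sat} this gives $(a,s'')\Vdash \W\phi$, completing the argument.

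There is no real obstacle here: the argument is purely a matter of chasing definitions, and the only substantive ingredient is the transitivity (and symmetry) of $\sim_a$. The reason it works so cleanly is that the name $n$ quantifies existentially \emph{outside} the $s'$-quantifier in item 6, so a single witness $n$ is required to cover the whole $\sim_a$-class of $s$, which is exactly the same class that needs to be covered at any $s''$ in that class.
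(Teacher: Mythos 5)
Your proof is correct and follows essentially the same route as the paper's: obtain the witnessing name $n$ from item 6 of Definition~\ref{sat}, reuse it at any $\sim_a$-indistinguishable state, and close the gap with transitivity of the equivalence relation $\sim_a$. No further comment is needed.
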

\begin{proof}
Consider any state $s'\in P(a)$ such that $s\sim_a s'$. By item 5 of Definition~\ref{sat}, it suffices to show that $(a,s')\Vdash\W\phi$.

By item 6 of Definition~\ref{sat}, assumption $(a,s)\Vdash \W\phi$ implies that there is a name $n\in N$ such that for each state $s''\in P(a)$ and each agent $a'\in A$, if $s\sim_a s''$ and $(a, s'',n,a')\in I$, then $(a',s'')\Vdash \phi$. Recall that $s\sim_a s'$. Thus, for each state $s''\in P(a)$ and each agent $a'\in A$, if $s'\sim_a s''$ and $(a, s'',n,a')\in I$, then $(a',s'')\Vdash \phi$ because $\sim_a$ is an equivalence relation. Therefore, $(a,s')\Vdash\W\phi$ by item 6 of Definition~\ref{sat}. 
\end{proof}

\section{Completeness Overview}\label{Completeness Overview section}

In this section we highlight the key steps in the proof of the completeness. The prove itself is located in the full paper. 

In modal logic, a proof of a completeness usually constructs a canonical model with states being maximal consistent sets. The key property of the canonical model is normally captured by the ``induction'' or ``truth'' lemma that ordinarily states that a formula is satisfied at a state if and only if it belongs to the corresponding maximal consistent set. In our case, satisfiability is defined as a relation $(a,s)\Vdash \phi$ between an agent $a$, a state $s$, and a formula $\phi$. As a result, in our construction, a maximal consistent set corresponds not to a state, but to a pair $(a,s)$ consisting of an agent $a$ and a state $s$. We informally refer to such pairs as ``views''. The induction lemma in our paper 
states that a formula is satisfied at a view if and only if it belongs to the maximal consistent set corresponding to this view.

There are three distinct challenges that we faced while proving the completeness theorem. The first of them is how to define agents and states, assuming that views are maximal consistent sets of formulae. Our first attempt was based on observation that two views that have the same states satisfy exactly the same $\A$-formulae. Thus, one can define states as classes of views (maximal consistent sets) that have the same $\A$-formulae. Similarly, it is reasonable to assume that if two sets have exactly the same $\K$-formulae, then they correspond to two views of the same agent in two indistinguishable states. Hence, one can define agents as classes of views that have the same $\K$-formulae. The problem with this approach is that there could be two distinct maximal consistent sets that have the same $\A$-formulae and the same $\K$-formula. Such sets could be unequal because, for example, one of them contains a propositional variable and the other the negation of the same variable. Informally, such sets would correspond to two different views of the same agent in the same state. This is problematic because our formal semantics captured in Definition~\ref{sat} assumes that if an agent $a$ is present in a state $s$, then she has a unique view $(a,s)$ in this state.

\begin{figure}
\begin{center}
\vspace{0mm}
\scalebox{0.45}{\includegraphics{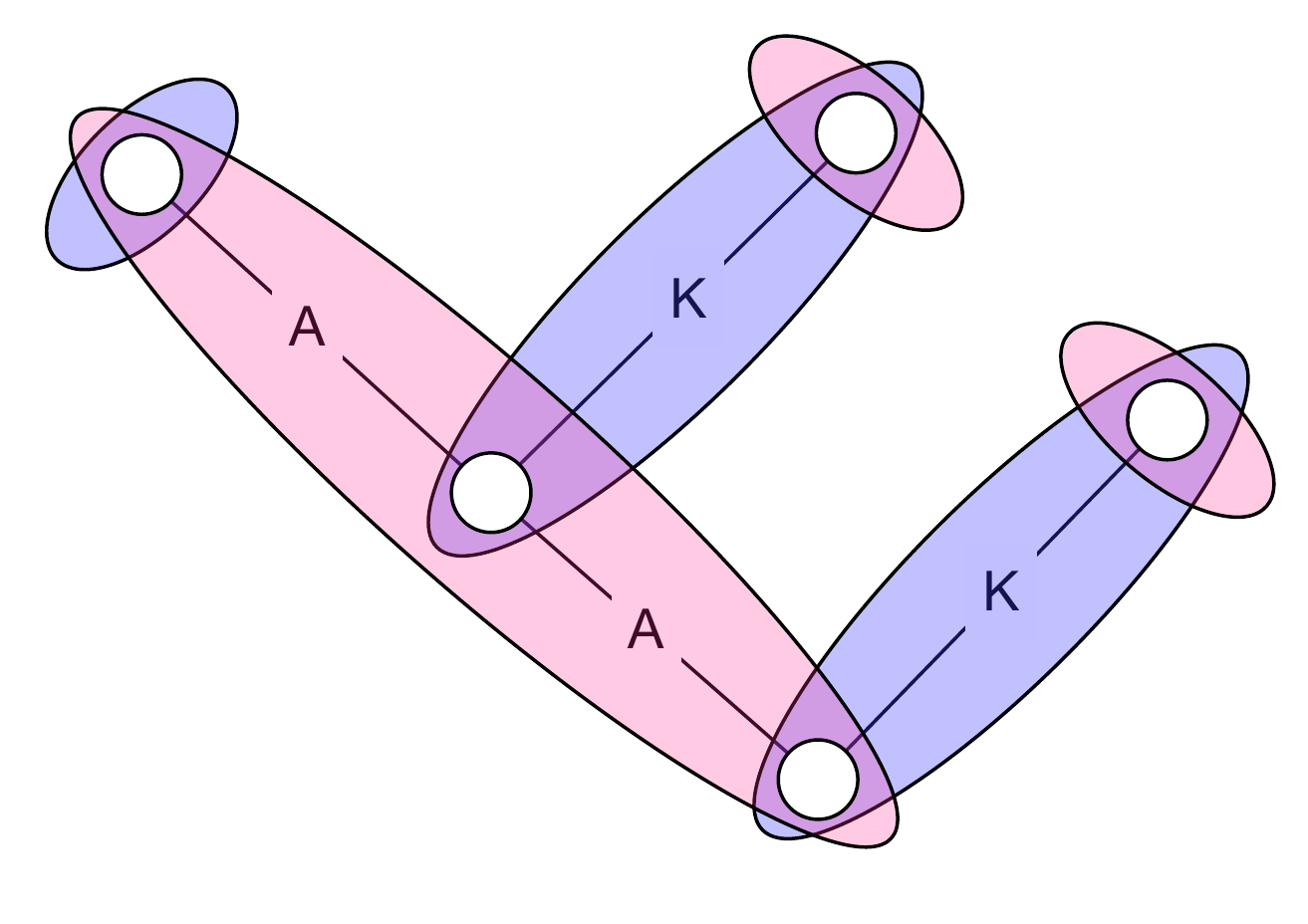}}
\caption{Nodes are views, pink $\A$-classes are states, and blue $\K$-classes are agents.}\label{tree with classes figure}
\end{center}
\end{figure}

To solve this problem, we need to guarantee that any class of views representing a state has at most one common element with any class of views representing an agent. We achieve this by using a {\em tree construction}. The canonical model in our proof is a tree whose nodes are labeled with maximal consistent sets and edges are labeled with a single modality: either $\A$ or $\K$, see Figure~\ref{tree with classes figure}. Informally, nodes of this tree correspond to views. We say that two nodes are $\A$-equivalent if all edges along the simple path between these two nodes are labeled with modality $\A$ and define states as equivalence classes with respect to this relation. Similarly, nodes are $\K$-equivalent if all edges along the simple path between them  are labeled with modality $\K$. Agents are $\K$-equivalence classes of nodes. Note that there is a unique simple path between any two nodes in a tree. As a result, the same two nodes cannot be $\A$-equivalent and $\K$-equivalent at the same time. Thus, this construction results in at most one node (view) corresponding to any pair consisting of an agent and a state. This guarantees that there is at most one view for any agent in any state. Of course, an agent ($\K$-equivalence class) might have no common nodes with a state ($\A$-equivalence class). In this case, the agent is not present in the state. 

As pointed out earlier, any two views that have the same state must have the same $\A$-formulae. We guarantee this by requiring any two nodes connected by an $\A$-edge to have the same $\A$-formulae. Similarly, we require any two nodes connected by a $\K$-edge to have the same $\K$-formulae.

Trees have previously been used in work on coalition know-how ~\cite{nt17aamas,nt18ai,nt18aaai,nt18aamas,cn20ai}, but for a different purpose -- to model distributed knowledge. The use of trees to guarantee that intersections of classes of nodes have at most one element is an original contribution of this work. 

The second major challenge that we had to overcome while proving the completeness is creating the actual nodes, or maximal consistent sets of formulae. The standard proof of completeness in modal logic usually contains a ``child'' lemma that for each maximal consistent set $X$ and each formula $\neg\Box\phi\in X$ constructs another set that contains formula $\neg\phi$. 
%
%
The situation is more complicated for modality $\W$ because one needs to construct two new interdependent maximal consistent sets simultaneously: one that corresponds to view $(a,s')$ and another to view $(a',s')$, see item 6 of Definition~\ref{sat}. Unfortunately, because of the interdependency, these two sets cannot be constructed consecutively. To construct them simultaneously, we developed a new technique that consists in defining a property of a pair of sets of formulae, choosing a pair of small sets satisfying this property, and then extending the sets while maintaining the property.  When fully extended, each of the sets will become the label of a node in the tree construction that we described above and will represent a view in our model. Informally, the property that we maintain could be described as ``views can co-exist in the same states''. We call such views {\em consonant}. A somewhat similar construction of two interdependent nodes has been used in~\cite{nt18ai,nt18aamas} to construct two states of a game in ``harmony''. The construction proposed in this paper creates two nodes that belong to the same state and, thus have the same $\A$-formulae. The two states in ``harmony'' are consecutive states of a game that do not share any specific class of formulae. As a result, the properties of consonant pairs are different from properties of pairs in ``harmony'' and the proofs that the corresponding constructions work are also different.   

The third challenge in constructing the canonical model is to define the right identification mechanism. What name should one of the blue classes (agents) in Figure~\ref{tree with classes figure} use to refer to another blue class in one of the pink classes (states)? The solution that we propose at first sounds unbelievably simple. In essence, when spokesman Birchim knows who is the killer, we want phrase ``is the killer'' to be the name under which Birchim knows the killer. In general if $(a,s)\Vdash \W\phi$, then formula $\phi$ itself is the name under which agent $a$ knows the agent with property $\phi$ in state $s$. Although elegant, this naming scheme has a fatal flaw: it does not distinguish between knowing that an agent {\em exists} and {\em knowing who} the agent is. For example, using this identification mechanism, spokesman Birchim would know who is the killer (agent named ``is the killer'') the moment Birchim is notified that the murder is committed. Similarly, a journalist arriving to the press conference would know who is the killer even before the conference starts.  In general, this naming space makes formula $\K\neg\A\neg\phi\to\W\phi$ true in any model that uses this identification mechanism. Since this formulae is not universally valid, the mechanism cannot be used in the canonical model construction of the completeness proof. 

We solve this problem by modifying the above identification mechanism. We still allow ``is the killer'' as the name, but we say that, when used by spokesman Birchim, this name refers to the actual killer only if in the current state Birchman actually knows who the killer is. Otherwise, when used by him, this name refers to all agents present in the state. Thus, if Birchman knows who the killer is, then he can use name ``is the killer'' to identify the killer, otherwise, he cannot. We are now ready to answer our prior question regarding names used by blue classes (agent) at pink classes (states) in Figure~\ref{tree with classes figure}. If the maximal consistent set of unique node at the intersection of an agent $a$ and a state $s$ contains formula $\W\phi$, then name $\phi$, when used by agent $a$ at state $s$, refers to all agents $b$ present in the state $s$ such that the maximal consistent set of the unique node at the intersection of agent $b$ and state $s$ contains formula $\phi$. Otherwise, name $\phi$ refers to all agents present in state $s$.  

\begin{figure}
\begin{center}
\vspace{0mm}
\scalebox{0.5}{\includegraphics{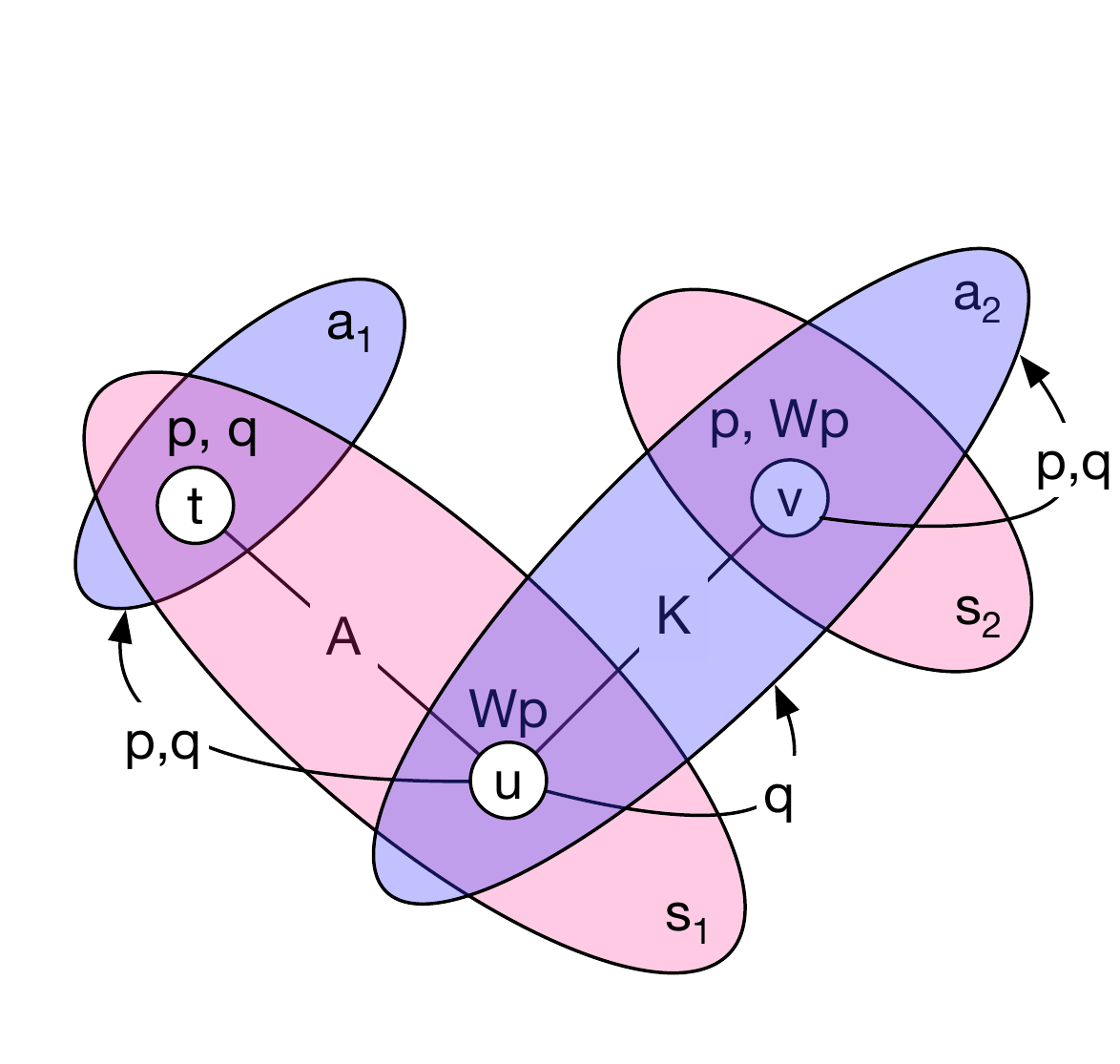}}
\caption{Fragment of a Canonical Model.}\label{tree with classes second figure}
\end{center}
\end{figure}

As an example, consider the fragment of the tree depicted in Figure~\ref{tree with classes second figure}. Nodes $u$ and $t$ are connected by an $\A$-edge. Thus, they represent views of two different agents, $a_1$ and $a_2$, in the same state $s_1$. On the other hand, nodes $u$ and $v$ are connected by a $\K$-edge. Hence, they represent two views of the same agent $a_2$ in two different (but indistinguishable to the agent) states: $s_1$ and $s_2$. Note that agent $a_1$ is not present in state $s_2$ because the corresponding ovals have no common nodes. The maximal consistent sets associated with views $u$ and $v$ contain formula $\W p$. As a result, when name $p$ is used in these two views, it refers to the agents in the same state whose maximal consistent sets contain variable $p$. In other words, when name $p$ is used by agent $a_2$ in state $s_1$, it refers to agent $a_1$ and when the same name is used by the same agent in state $s_2$, it refers to agent $a_2$ herself. At the same time, because formula $\W q$ does not belong to the maximal consistent sets corresponding to nodes $u$ and $v$, when name $q$ is used in these two views, it refers to all agents present in the corresponding state. In other words, in state $s_1$ name $q$ is used by agent $a_2$ to refer to herself and agent $a_1$; in state $s_2$ the same name is used by the same agent to refer only to herself.

This concludes the overview of the proof of the strong completeness theorem stated below. The complete proof can be found in the full paper.

\begin{theorem}
If $X\nvdash\phi$, then there is an agent $a\in A$ and a state $s\in P(a)$ of a model $(S,A,P,\{\sim_a\}_{a\in A},N,I,\pi)$ such that $(a,s)\Vdash\chi$ for each formula $\chi\in X$ and $(a,s)\nVdash\phi$. 
\end{theorem}

\section{Conclusion}\label{Conclusion section}

The contribution of this paper is three-fold. First, we proposed a formal semantics of modality {\em know-who} which is based on Grove-Halpern epistemic models with names.  Second, following~\cite{slg13tark} we propose a syntax for this modality that does not require the use of agent subscript. Without this modification to the language it would be hard to express statements like ``a journalist knows who knows who the murderer is''. Finally, we give a complete logical system that describes the interplay between modalities ``know-who", ``know'', and ``for all agents''. We believe that the standard filtration technique from modal logic could be used to prove weak completeness of our logical system with respect to the class of finite models. This would imply that our system, unlike logics with quantifiers previously used to capture know-who, is decidable. 
We also believe that the results in this paper could be generalized to a logical system that supports multiple name spaces. Each such name space $\eta$ will have its own identification mechanism $I_\eta$ in Definition~\ref{model} and its own modality $\W_\eta$. 


\bibliography{sp}

\end{document}